\definecolor{bittersweet}{rgb}{1.0, 0.44, 0.37}
\newcommand{\sk}{}
\definecolor{thetruth}{rgb}{0,0.701,0.117}
\definecolor{fourierblue}{rgb}{0.78, 0.84, 0.93}
\newcommand{\R}{\mathbb{R}}
\newcommand{\domain}{\Omega}
\newcommand{\Nem}{Nemytskii}
\newcommand{\abs}[1]{\left|#1\right|}
\let\blx@rerun@biber\relax
\begin{document}
\title{Resolution-Invariant Image Classification based on Fourier Neural Operators\thanks{This work was supported by the European Union’s Horizon 2020 programme
, Marie Sk\l odowska-Curie grant agreement No. 777826.
TR and MB acknowledge the support of the BMBF, grant agreement No. 05M2020. 
SK and MB acknowledge the support of the DFG, project BU 2327/19-1. This work was carried out while MB was with the FAU Erlangen-Nürnberg.} }
\titlerunning{Resolution-Invariant Image Classification based on FNOs}
\author{%
Samira Kabri\inst{1}$^{(\text{\Letter})}$
\and%
Tim Roith\inst{1}
\and%
Daniel Tenbrinck\inst{1}
\and%
Martin Burger\inst{2,3}
}
\institute{Friedrich-Alexander-Universität Erlangen-N\"urnberg, 91058 Erlangen, Germany \and Deutsches Elektronen-Synchrotron, 22607 Hamburg, Germany \and Universit\"at Hamburg, Fachbereich Mathematik, 20146 Hamburg, Germany \\ \Letter\,\email{samira.kabri@fau.de} 
}
\authorrunning{S. Kabri et al.}
\titlerunning{Resolution-Invariant Image Classification based on FNOs}
\maketitle              
\begin{abstract}
In this paper we investigate the use of Fourier Neural Operators (FNOs) for image classification in comparison to standard Convolutional Neural Networks (CNNs). Neural operators are a discretization-invariant generalization of neural networks to approximate operators between infinite dimensional function spaces. FNOs---which are neural operators with a specific parametrization---have been applied successfully in the context of parametric PDEs. We derive the FNO architecture as an example for continuous and Fréchet-differentiable neural operators on Lebesgue spaces. We further show how CNNs can be converted into FNOs and vice versa and propose an interpolation-equivariant adaptation of the architecture.
\keywords{neural operators  \and trigonometric interpolation \and Fourier neural operators \and convolutional neural networks \and resolution invariance}
\end{abstract}
\section{Introduction}
Neural networks, in particular CNNs, are a highly effective tool for image classification tasks. Substituting fully-connected layers by convolutional layers allows for efficient extraction of local features at different levels of detail with reasonably low complexity. However, neural networks in general are not resolution-invariant, meaning that they do not generalize well to unseen input resolutions. In addition to interpolation of inputs to the training resolution, various other approaches have been proposed to address this issue, see, e.g., \cite{cai2019convolutional,koziarski2018impact,peng2016fine}. In this work we focus on the interpretation of digital images as discretizations of functions. This allows to model the feature extractor as a mapping between infinite dimensional spaces with the help of so-called neural operators, see \cite{kovachki2021neural}. In \cref{sec:nemytskii}, we use established results on Nemytskii operators to derive conditions for well-definedness, continuity, and Fréchet-differentiability of neural operators on Lebesgue spaces. We specifically show these properties for the class of FNOs proposed in \cite{li2021fourier} as a discretization-invariant generalization of CNNs. 

The key idea of FNOs is to parametrize convolutional kernels by their Fourier coefficients, i.e., in the spectral domain. Using trainable filters in the Fourier domain to represent convolution kernels in the context of image processing with neural networks has been studied with respect to performance and robustness in recent works, see e.g., \cite{chi2020fast,rao2021global,zhou2022deep}. In \cref{sec:cnnfno} we analyze the interchangeability of CNNs and FNOs with respect to optimization, parameter complexity, and generalization to varying input resolutions. While we restrict our theoretical derivations to real-valued functions, we note that they can be naturally extended to vector-valued functions as well.  Our findings are supported by numerical experiments on the FashionMNIST \cite{xiao2017} and Birds500  \cite{pio450} data sets in \cref{sec:num}.

\begin{figure}[t]
\begin{minipage}[t]{\textwidth}
\begin{minipage}{\textwidth}
\begin{minipage}{0.01\textwidth}%
\rotatebox{90}{%
\scriptsize%
higher resolution \phantom{--------} 
original \phantom{-----------} 
lower resolution\phantom{--}}
\end{minipage}%
\hfill%
\begin{minipage}{0.95\textwidth}
\includegraphics[width=\textwidth, trim= 0cm 1cm 0cm 1cm, clip]{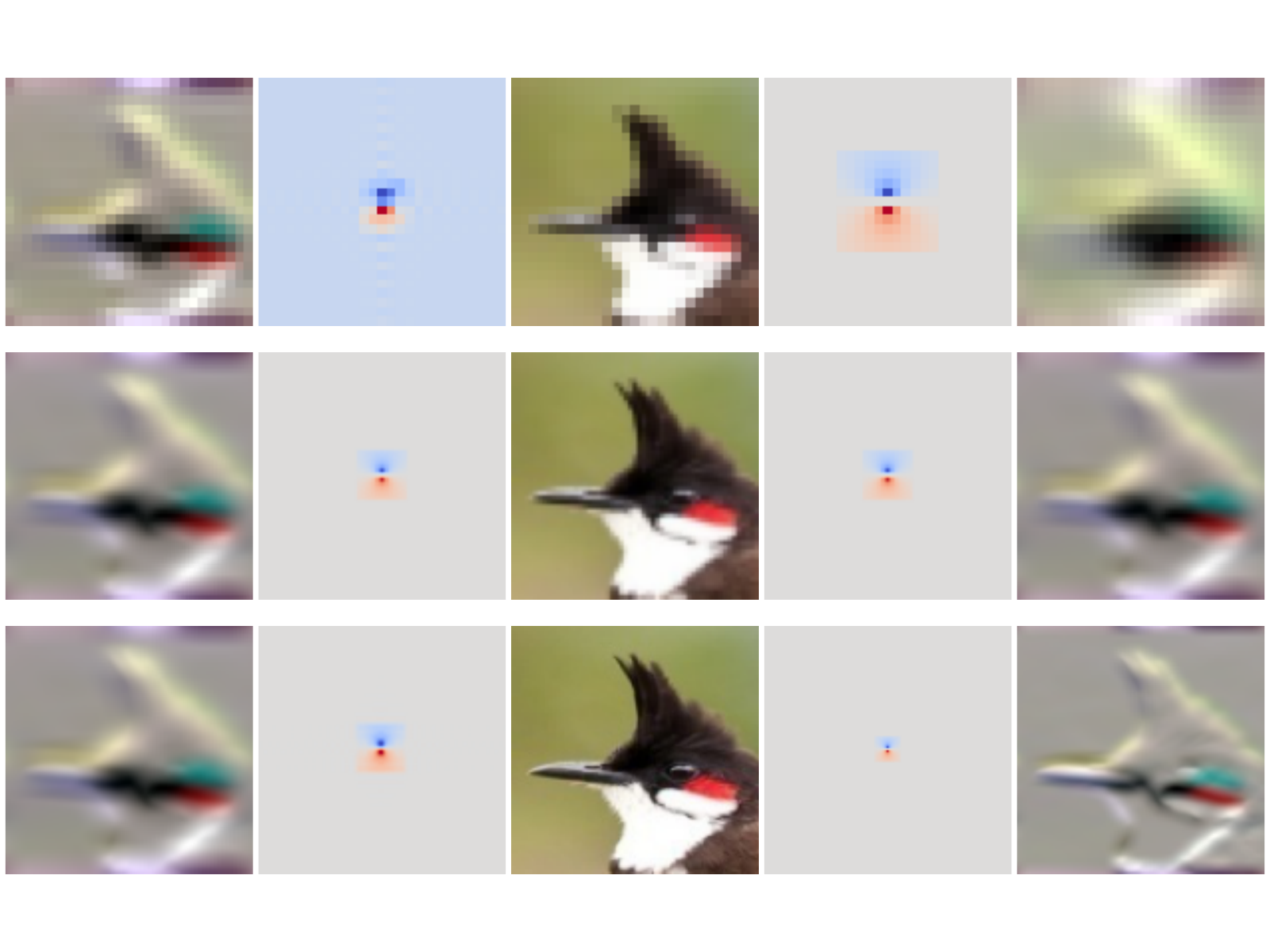}%
\end{minipage}%
\hfill%
\begin{minipage}{0.01\textwidth}
\phantom{-}
\end{minipage}
\end{minipage}
\hfill%
\begin{minipage}[t]{.5\textwidth}%
\begin{tikzpicture}[]
\node [
    draw=none,
    single arrow,
    right color=white,
    left color=fourierblue,
    text=black,
    single arrow head extend=0.2cm,
    minimum height=\textwidth-5pt,
    minimum width=.5cm,
    single arrow tip angle=70,
    shape border rotate=180
    ]{\scriptsize Convolution with 
    \textbf{spectral} zero-padding};
\end{tikzpicture}\hfill%
\end{minipage}%
\begin{minipage}[t]{.5\textwidth}%
\hfill%
\begin{tikzpicture}[]
\node [
    draw=none,
    single arrow,
    right color=fourierblue,
    left color=white,
    text=black,
    single arrow head extend=0.2cm,
    minimum height=\textwidth-5pt,
    minimum width=.5cm,
    single arrow tip angle=70,
    shape border rotate=0
    ]{\scriptsize \phantom{-}Convolution with 
    \textbf{spatial} zero-padding};
\end{tikzpicture}%
\end{minipage}%
\end{minipage}%
\caption[Effects of applying convolutional filters with different resolutions]{%
Effects of applying a convolutional filter on the same 
image\footnotemark{} 
with different resolutions. Spatial zero-padding (standard CNN-implementation) changes the relation of kernel support to image domain, while spectral zero-padding (FNO-implementation) captures comparable features for all resolutions.}
\label{fig:bulbul}
\end{figure}
\footnotetext{This image depicts a red whiskered bulbul taken from the Birds500 dataset \cite{pio450}.}
\section{Construction of Neural Operators on Lebesgue Spaces}\label{sec:nemytskii}
\subsection{Well-definedness and Continuity}
A neural operator as defined in \cite{kovachki2021neural} is a composition of a finite but arbitrary number of so-called operator layers. In this section we derive conditions on the components of an operator layer, such that it is a well-defined and continuous operator between two Lebesgue spaces. More precisely, for a bounded domain $\domain \subset \R^d$ and $1 \leq p,q \leq +\infty$ we aim to construct a continuous operator $\mathcal{L}:  L^p(\domain) \rightarrow L^q(\domain)$, such that an input function $u \in L^p(\domain)$ is mapped to 
\begin{align}\label{eq:layerdef}
    \mathcal{L}(u)(x) = \sigma\left(\Psi(u)(x)\right) \qquad \text{for \sk{a.e.} } x \in \domain,
\end{align} 
where we summarize all \sk{affine} operations with an operator $\Psi$, such that
\begin{align}\label{eq:linearpart}
    \Psi(u) = W u + \mathcal{K}u + b.
\end{align}
Here, the weighting by $W \in \R$ implements a residual component and the kernel integral operator $\mathcal{K}: u \mapsto \int_{\domain} \kappa(\cdot, y)\, u(y)\,dy$, determined by a kernel function $\kappa: \domain \times \domain \rightarrow \R$ generalizes the discrete weighting performed in neural networks. Analogously, the bias function $b: \domain \rightarrow \R$ is the continuous counterpart of a bias vector. The (non-linear) activation function $\sigma: \R \rightarrow \R$ is applied pointwise and thus acts as a Nemytskii operator (see e.g., \cite{goldberg92}).
Thus, with a slight abuse of notation, the associated Nemytskii operator takes the form%
\begin{align}\label{eq:nemytskii}%
\sigma: v \mapsto \sigma(v(\cdot)),
\end{align}
where we assume $\sigma$ to be a measurable function.
In order to ensure that the associated \Nem{} operator defines a mapping 
$\sigma:L^p(\domain)\to L^q(\domain)$ for 
$1\leq p,q \leq \infty$ we require 
the following conditions to hold:
\begin{align}\label{eq:cond}
\begin{aligned}
\underline{p,q<\infty}: \ &|\sigma(x)| \leq K +\beta |x|^{\frac{p}{q}} \text{ for all } x \in \R\text{ and constants } \beta, K\in\R,\\
\underline{p=\infty}: \ &|\sigma(x)| \leq K(c) \text{ for every } c>0 \text{ for all } x, \abs{x}<c\\
&\text{ and a constant }K(c)\in\R\text{ depending on }c,\\
\underline{p<\infty, q=\infty}: \ &|\sigma(x)| \leq K\text{ for all } x \in \R\text{ and a constant } K\in\R,
\end{aligned}
\end{align}
which were used in \cite{goldberg92}.
\begin{lemma}
For $1\leq p,q\leq \infty$ assume that $\sigma$ fulfills \eqref{eq:cond}. Then we have that the associated \Nem{} operator is a mapping $\sigma:L^p(\domain)\to L^q(\domain)$.
\end{lemma}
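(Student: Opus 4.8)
The plan is to verify the two defining requirements of a well-defined \Nem{} operator $\sigma: L^p(\domain) \to L^q(\domain)$ separately: first that $\sigma(u(\cdot))$ is measurable for every measurable representative $u$, and second that it actually lies in $L^q(\domain)$. For the measurability I would rely on $\sigma$ being (Borel) measurable, so that the composition $\sigma \circ u$ with a measurable function $u$ is again measurable; this part does not invoke the growth bounds at all. The substantive content is the $L^q$-bound, which I would establish by a case distinction matching exactly the three regimes of \eqref{eq:cond}.

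For the principal case $p, q < \infty$ I would start from the pointwise estimate $|\sigma(u(x))| \le K + \beta |u(x)|^{p/q}$, raise it to the $q$-th power, and apply the elementary convexity inequality $(a+b)^q \le 2^{q-1}(a^q + b^q)$, valid for $q \ge 1$, to obtain
\begin{align*}
\int_{\domain} |\sigma(u(x))|^q \, dx \le 2^{q-1}\Big(K^q \, |\domain| + \beta^q \int_{\domain} |u(x)|^{p} \, dx\Big).
\end{align*}
The crucial observation---and really the whole reason the exponent is chosen to be $p/q$---is that $\big(|u|^{p/q}\big)^q = |u|^p$, so the right-hand side is finite precisely because $u \in L^p(\domain)$ and because $\domain$ is bounded, so that $|\domain| < \infty$ absorbs the constant term. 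This yields $\sigma(u) \in L^q(\domain)$.

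The two remaining cases are lighter. For $p = \infty$ and arbitrary $q$, any $u \in L^\infty(\domain)$ satisfies $|u(x)| < c$ a.e.\ for $c = \norm{u}_{L^\infty} + 1$, whence $|\sigma(u(x))| \le K(c)$ a.e.; thus $\sigma(u)$ is essentially bounded and, since $\domain$ is bounded, belongs to $L^q(\domain)$ for every $q$. For $p < \infty$, $q = \infty$, the uniform bound $|\sigma(x)| \le K$ gives $|\sigma(u(x))| \le K$ a.e.\ directly, so $\sigma(u) \in L^\infty(\domain)$. Together these three regimes exhaust all pairs $1 \le p, q \le \infty$.

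I expect the only genuinely delicate point to be the measurability of the composition $\sigma \circ u$, where one must read the standing assumption as Borel (not merely Lebesgue) measurability of $\sigma$ to guarantee that the composite stays measurable. The growth estimates themselves are routine once the exponent matching $\big(|u|^{p/q}\big)^q = |u|^p$ is recognized as the engine of the $p,q<\infty$ case, and the boundedness of $\domain$ is used throughout to ensure that constants and essentially bounded functions remain integrable.
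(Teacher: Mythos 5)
Your proof is correct and follows the same route the paper takes, which simply defers to the estimates in \eqref{eq:cond} (citing Goldberg et al.'s Theorem 1); you have just written out the details the paper omits, namely the convexity inequality $(a+b)^q\le 2^{q-1}(a^q+b^q)$, the exponent matching $(|u|^{p/q})^q=|u|^p$, and the use of $|\domain|<\infty$. Your remark that $\sigma$ should be read as Borel measurable so that $\sigma\circ u$ stays measurable is a fair point of care that the paper glosses over.
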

\begin{proof}
Similar to \cite[Th. 1]{goldberg92}, follows directly by employing the estimates in \eqref{eq:cond}.
\end{proof}
Since we are interested in continuity properties of the layer in \eqref{eq:layerdef} we consider the following continuity result for 
\Nem{} operators.
\begin{lemma}\label{lem:continuity}
For $1\leq p \leq \infty, 1\leq q < \infty$ assume that the function $\sigma:\R\to\R$ 
is continuous and uniformly continuous in the case $q=\infty$. If the associated 
\Nem{} operator is a mapping $\sigma: L^p(\domain)\to L^q(\domain)$ then it is continuous.
\end{lemma}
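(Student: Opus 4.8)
The plan is to establish sequential continuity: given $u_n \to u$ in $L^p(\domain)$, I would show $\sigma(u_n) \to \sigma(u)$ in $L^q(\domain)$. The core difficulty is that convergence in $L^p$ does not by itself yield pointwise almost-everywhere convergence or a dominating function for the \emph{entire} sequence, so the dominated convergence theorem cannot be applied directly. The standard remedy is a double-subsequence argument: to prove $\sigma(u_n) \to \sigma(u)$ in $L^q$, it suffices to show that every subsequence of $(u_n)$ admits a further subsequence along which $\sigma(u_{n_{k_j}}) \to \sigma(u)$ in $L^q$.

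For $p < \infty$ I would proceed as follows. Given an arbitrary subsequence, the convergence $u_{n_k} \to u$ in $L^p$ lets me extract (by the subsequence argument underlying the Riesz--Fischer theorem) a further subsequence $u_{n_{k_j}}$ that converges to $u$ pointwise almost everywhere and is dominated by a single function $g \in L^p(\domain)$, i.e.\ $\abs{u_{n_{k_j}}} \le g$ a.e. Continuity of $\sigma$ then gives $\sigma(u_{n_{k_j}}) \to \sigma(u)$ pointwise a.e. The growth bound from \eqref{eq:cond}, namely $\abs{\sigma(x)} \le K + \beta \abs{x}^{p/q}$, yields the pointwise estimate $\abs{\sigma(u_{n_{k_j}})} \le K + \beta\, g^{p/q}$, and here the precise exponent $p/q$ is exactly what guarantees that the right-hand side lies in $L^q(\domain)$, since $g \in L^p$ implies $g^{p/q} \in L^q$. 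With this $L^q$-dominating function in hand, the dominated convergence theorem delivers $\sigma(u_{n_{k_j}}) \to \sigma(u)$ in $L^q(\domain)$, closing the subsequence argument.

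The boundary case $p = \infty$ I would treat directly, without the subsequence machinery. Since $u \in L^\infty(\domain)$ there is $c>0$ with $\abs{u} \le c$ a.e., and $u_n \to u$ in $L^\infty$ confines the arguments to a fixed bounded set while forcing uniform (a.e.) convergence of $u_n$ to $u$. On this bounded set the uniform continuity of $\sigma$ converts uniform convergence of the arguments into uniform a.e.\ convergence of $\sigma(u_n)$ to $\sigma(u)$; because $\domain$ is bounded, uniform convergence implies $L^q$-convergence. This is precisely the step for which the uniform-continuity requirement in the boundary case is needed.

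I expect the main obstacle to be the interplay highlighted above: reconciling the fact that $L^p$-limits only control convergence up to subsequences with the need for an $L^q$-integrable majorant. Both issues are resolved by combining the Riesz--Fischer extraction with the growth condition \eqref{eq:cond}, whose exponent $p/q$ is tailored so that the majorant survives the passage from $L^p$ to $L^q$. The remaining ingredients---pointwise convergence from continuity and the final dominated-convergence limit---are routine.
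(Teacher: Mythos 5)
Your argument for the main case $p,q<\infty$ is correct and is, in substance, the standard proof that the paper merely delegates to the literature (the paper's own ``proof'' consists of two citations), so you are supplying genuine content: the double-subsequence reduction, the Riesz--Fischer extraction of an a.e.-convergent subsequence with an $L^p$ majorant $g$, the observation that $g\in L^p$ gives $g^{p/q}\in L^q$, and dominated convergence in $L^q$ all go through. One small point of hygiene: the lemma's literal hypothesis is only the mapping property $\sigma:L^p(\domain)\to L^q(\domain)$, not the growth bound \eqref{eq:cond}; you should either invoke the classical converse (for continuous $\sigma$ on a bounded domain, the mapping property forces \eqref{eq:cond}) or note that in this paper the bound is always available via the preceding lemma.

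There are two genuine problems. First, you have attached the uniform-continuity hypothesis to the wrong boundary case. The lemma requires uniform continuity of $\sigma$ when $q=\infty$, not when $p=\infty$. In your $p=\infty$, $q<\infty$ case no extra hypothesis is needed: $L^\infty$-convergence confines the arguments to a compact interval $[-c,c]$, on which plain continuity of $\sigma$ already yields uniform continuity, so your claim that ``this is precisely the step for which the uniform-continuity requirement is needed'' is a misreading. Second, the case $q=\infty$ with $p<\infty$ is not addressed at all. The lemma's header says $q<\infty$, but its hypothesis explicitly mentions $q=\infty$ and the paper's proof cites a separate reference for that case, so it is clearly meant to be covered; this is exactly the case where global uniform continuity of $\sigma$ does real work (for $q=\infty$ one must upgrade $L^p$-convergence of the inputs to essentially uniform convergence of the outputs, and neither pointwise a.e.\ convergence nor a dominating function suffices for an $L^\infty$ limit). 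As it stands your proof covers the literal range stated but not the range the paper actually uses, and the missing case cannot be obtained by the dominated-convergence machinery you set up.
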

\begin{proof}
For $q<\infty$ the proof can be adapted from \cite[p. 155-158]{vauinberg74}. For the case $q=\infty$ we refer to \cite[Th. 5]{goldberg92}.
\end{proof}
\begin{remark}
For $1\leq p \leq q < \infty$
it is sufficient for $\sigma$ to be $p/q$-Hölder continuous or locally Lipschitz continuous for $p,q=\infty$. In that case the Hölder and respectively the Lipschitz continuity transfers to the \Nem{} operator, see \cite{tröltzsch}.
\end{remark}
\begin{example}\label{ex:relu}
The ReLU (Rectified Linear Unit, see \cite{fuku75}) function $\sigma(x) = \max(0,x)$ generates a continuous Nemytskii operator $\sigma: L^{p}(\Omega) \rightarrow L^{q}(\Omega)$ for any $p \geq q$. To show this, we note that the function $\sigma$ is Lipschitz-continuous and with $p \geq q$ we have for all $x\in\R$ that $
\abs{\sigma(x)} \leq \abs{x} \leq 1+ \abs{x}^{\frac{p}{q}}.$

\end{example}

\begin{proposition}
For $1 \leq p, q \leq \infty$ let $\mathcal{L}$ be an operator layer given by \eqref{eq:layerdef} with an activation function $\sigma: \R \rightarrow \R$.
If there exists $r \geq 1$ such that
\begin{enumerate}[label=(\roman*)]
    \item the \sk{affine} part defines a mapping $\Psi: L^p(\domain) \rightarrow L^r(\domain)$,
    \item the activation funtion $\sigma$ generates a Nemytskii operator $\sigma: L^r(\domain) \rightarrow L^q(\domain)$,
\end{enumerate}
then it holds that
$
\mathcal{L}: L^p(\domain) \rightarrow L^q(\domain).
$
If additionally $\Psi$ is a continuous operator on the specified spaces and the function $\sigma$ is continuous, or uniformly continuous in the case $q=\infty$, the operator $\mathcal{L}: L^p(\domain) \rightarrow L^q(\domain)$ is also continuous.
\end{proposition}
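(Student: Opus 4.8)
The plan is to treat the operator layer as a genuine composition and to reduce the statement to the two lemmas already established. Writing $N_\sigma$ for the \Nem{} operator from \eqref{eq:nemytskii}, the defining identity \eqref{eq:layerdef} together with \eqref{eq:linearpart} says precisely that
\[
\mathcal{L} = N_\sigma \circ \Psi,
\]
so the entire argument amounts to chasing this composition through the chain of Lebesgue spaces, with the analytic work already packaged into hypotheses (i)--(ii) and into \cref{lem:continuity}.

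For well-definedness I would simply track domains and codomains. By hypothesis (i) the affine part $\Psi$ maps $L^p(\domain)$ into $L^r(\domain)$, and by hypothesis (ii) the \Nem{} operator $N_\sigma$ maps $L^r(\domain)$ into $L^q(\domain)$. Hence for any $u \in L^p(\domain)$ the element $\Psi(u)$ lies in $L^r(\domain)$, so that $N_\sigma(\Psi(u)) = \mathcal{L}(u)$ lies in $L^q(\domain)$, giving $\mathcal{L}: L^p(\domain)\to L^q(\domain)$. No estimates are needed here beyond those already absorbed into (i) and (ii).

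For the continuity statement the key observation is that a composition of continuous maps between normed spaces is again continuous, so it suffices to establish continuity of each factor separately. Continuity of $\Psi: L^p(\domain)\to L^r(\domain)$ is assumed outright. Continuity of the \Nem{} operator $N_\sigma: L^r(\domain)\to L^q(\domain)$ is exactly what \cref{lem:continuity} delivers: applying that lemma with the exponent $r$ in the role of its domain exponent, its hypotheses are met, since $N_\sigma$ is already known to be a mapping $L^r(\domain)\to L^q(\domain)$ by (ii), while $\sigma$ is continuous, and uniformly continuous when $q=\infty$, by assumption. Composing the two continuous factors then yields continuity of $\mathcal{L}$.

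I do not expect a genuine obstacle here, as the substantive analysis has been confined to the preceding lemmas and this proposition is essentially a bookkeeping result. The only point that warrants care is matching the index conventions of \cref{lem:continuity}---phrased for a domain exponent in $[1,\infty]$ and a target exponent in $[1,\infty)$, with the separate uniformly continuous treatment reserved for $q=\infty$---to the roles played here by the intermediate exponent $r$ and the target exponent $q$. Once $r$ is fixed by the factorization, verifying that these conditions transfer is immediate.
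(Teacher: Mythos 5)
Your proposal is correct and follows essentially the same route as the paper's proof: writing $\mathcal{L}$ as the composition $\sigma\circ\Psi$ through the intermediate space $L^r(\domain)$ and invoking \cref{lem:continuity} for the continuity of the \Nem{} factor. The paper states this in one line; you have simply spelled out the same bookkeeping.
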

\begin{proof}
With the assumptions on $\sigma$ we directly have $\mathcal{L} = \sigma \circ \Psi:  L^p(\domain) \rightarrow L^q(\domain)$. The continuity of $\mathcal{L}$ follows from Lemma \ref{lem:continuity}.
\end{proof}
\begin{example}\label{ex:continuouslayer}
On the periodic domain $\domain = \R / \mathbb{Z}$ consider an \sk{affine} operator $\Psi$ as defined in \eqref{eq:linearpart}, where the integral operator is a convolution operator, i.e., $\kappa(x,y) = \kappa(x-y)$ with a slight abuse of notation. If for $1\leq p,r,s \leq \infty$ we have that $\kappa \in L^{s}(\domain)$ with 
$
{1}/{r} + 1 = {1}/{p} + {1}/{s},
$
it follows from Young's convolution inequality (see e.g., \cite[Th. 1.2.12]{grafakos}) that $\mathcal{K}: L^p(\domain) \rightarrow L^r(\domain)$ is continuous. 
If further $b \in L^r(\domain)$ and $W = 0$ in the case $r > p$, it follows directly that
$\Psi: L^p(\domain) \rightarrow L^{r}(\domain)$
is continuous.
\end{example}

\subsection{Differentiability}
 To analyze the differentiability of the neural operator layers we first transfer the result for general Nemytskii operators from \cite[Th. 7]{goldberg92} to our setting.%
\begin{theorem}\label{thm:frechetdiff}
Let $1\leq q < p < \infty$ or $q = p = \infty$ and $\sigma: \R \rightarrow \R$ a continuously differentiable function. Furthermore, let the Nemytskii operator associated to the derivative $\sigma'$ be a continuous operator
$
\sigma':L^{p}(\domain) \rightarrow L^{s}(\domain),
$
with coefficient 
$
s = {pq}/{(p-q)}
$
for $q < p$ and $s = \infty$ for $q = p = \infty$. Then, the Nemytskii operator associated to $\sigma$ is Fréchet-differentiable and its Fréchet-derivative 
$
D\sigma\sk{(v)}: L^{p}(\domain) \rightarrow L^q(\domain)
$
in $v \in L^p(\domain)$ is given by
\begin{align*}
D\sigma(v)(h) = \sigma'(v)\cdot h, \qquad\text{for all }h\in L^p(\domain).
\end{align*}
\end{theorem}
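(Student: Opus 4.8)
The plan is to verify Fréchet differentiability directly from the definition, with the candidate derivative $D\sigma(v)(h) = \sigma'(v)\cdot h$ already supplied by the statement. The starting observation is that the exponents satisfy the Hölder relation $1/q = 1/p + 1/s$ (for $q<p$ this is exactly $s = pq/(p-q)$, and for $q=p=\infty$ it reads $0=0+0$). Since the hypothesis guarantees $\sigma'(v)\in L^s(\domain)$ and $h\in L^p(\domain)$, Hölder's inequality then shows $\sigma'(v)\cdot h \in L^q(\domain)$ with $\norm{\sigma'(v)\cdot h}_{L^q} \leq \norm{\sigma'(v)}_{L^s}\norm{h}_{L^p}$, so that $D\sigma(v)$ is a bounded linear map $L^p(\domain)\to L^q(\domain)$, as required.

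The core of the argument is to control the remainder $R(h) := \sigma(v+h) - \sigma(v) - \sigma'(v)\cdot h$. Since $\sigma\in C^1$, the fundamental theorem of calculus applied pointwise gives, for a.e. $x\in\domain$,
\begin{align*}
R(h)(x) = \int_0^1 \bigl[\sigma'(v(x) + t\,h(x)) - \sigma'(v(x))\bigr]\,h(x)\,dt .
\end{align*}
Taking the $L^q(\domain)$ norm, applying Minkowski's integral inequality to move the norm inside the $t$-integral, and then Hölder's inequality in $x$ with the exponents above yields
\begin{align*}
\norm{R(h)}_{L^q} \leq \norm{h}_{L^p}\int_0^1 \norm{\sigma'(v+t\,h) - \sigma'(v)}_{L^s}\,dt .
\end{align*}
Thus Fréchet differentiability reduces to showing that the integral tends to $0$ as $\norm{h}_{L^p}\to 0$.

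For this final step I would argue pointwise in $t$ and then pass to the limit under the integral. Because $\norm{(v+t\,h) - v}_{L^p} = t\,\norm{h}_{L^p}\leq \norm{h}_{L^p}$, the assumed continuity of the \Nem{} operator $\sigma':L^p(\domain)\to L^s(\domain)$ gives $\norm{\sigma'(v+t\,h) - \sigma'(v)}_{L^s}\to 0$ for each fixed $t\in[0,1]$. To interchange limit and integration along a sequence $h_n\to 0$, I would dominate the integrand uniformly in $n$ and $t$: in the case $p,s<\infty$ the well-definedness of $\sigma':L^p\to L^s$ forces a growth bound $\abs{\sigma'(x)}\leq K + \beta\abs{x}^{p/s}$ (cf. \eqref{eq:cond}), whence $\norm{\sigma'(w)}_{L^s}$ is bounded on bounded subsets of $L^p(\domain)$; this provides a constant dominating function on $[0,1]$ once $\norm{h_n}_{L^p}\leq 1$, and dominated convergence closes the case. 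In the case $p=s=\infty$ I would instead use that $v(x)$ and $v(x)+t\,h(x)$ lie in a fixed compact interval determined by $\norm{v}_{L^\infty}$ and $\norm{h}_{L^\infty}$, and exploit the uniform continuity of $\sigma'$ there to bound the supremum over $t$ directly.

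The main obstacle is precisely this passage to the limit inside the $t$-integral: pointwise convergence is immediate from the continuity hypothesis, but justifying the interchange requires the uniform domination, and it is here that the growth condition implied by well-definedness (for finite exponents) or the uniform continuity of $\sigma'$ on compacta (for the $L^\infty$ case) is essential. Everything else is a routine combination of the fundamental theorem of calculus, Minkowski's integral inequality, and Hölder's inequality.
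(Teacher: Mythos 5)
Your argument is correct and is essentially the classical proof that the paper itself delegates to \cite[Th.~7]{goldberg92}: remainder via the fundamental theorem of calculus, Minkowski's integral inequality, Hölder's inequality with $1/q = 1/p + 1/s$, and continuity of the \Nem{} operator associated to $\sigma'$. One small simplification: since $\norm{t\,h}_{L^p}\leq\norm{h}_{L^p}$ for $t\in[0,1]$, continuity of $\sigma':L^p(\domain)\to L^s(\domain)$ at the single point $v$ already makes $\norm{\sigma'(v+t\,h)-\sigma'(v)}_{L^s}$ small uniformly in $t$ once $\norm{h}_{L^p}$ is small, so the dominated-convergence and growth-bound machinery in your final step is not actually needed.
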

Since the ReLU activation function from Example \ref{ex:relu} is not differentiable, it does not fulfill the requirements of Theorem \ref{thm:frechetdiff}. An alternative is the so-called Gaussian Error Linear Unit (GELU), proposed in \cite{hen16}.
\begin{example}
    The GELU function $\sigma(x) = x\,\Phi(x)$, where $\Phi$ denotes the cumulative distribution function of the standard normal distribution, generates a Fréchet-differentiable Nemytskii operator with derivative \sk{$D\sigma(v):  L^p(\domain) \rightarrow L^{q}(\Omega)$ for any $p \geq q$ and $v\in L^p(\domain)$}.
    To show this, we compute 
    $
    \sigma'(x) = \Phi(x) + x \phi(x),
    $
    where $\phi(x) = \Phi'(x)$ is the standard normal distribution. We see that $\sigma'$ is continuous and further $
        |\sigma'(x)| \leq 1 + |x|/\sqrt{2\pi} \leq 1 + 1/\sqrt{2\pi} + |x|^{\frac{p}{q}}/\sqrt{2\pi}
    $ for all $p \geq q$.
\end{example}
\begin{proposition}\label{prop:frechet}
For $1 \leq p, q \leq \infty$, let $\mathcal{L}$ be an operator layer given by \eqref{eq:layerdef} with \sk{affine} part $\Psi$ as in \eqref{eq:linearpart}. If there exists $r>q$, or $r = q = \infty$ such that
\begin{enumerate}[label=(\roman*)]
    \item the \sk{affine} part is a continuous operator $\Psi: L^p(\domain) \rightarrow L^r(\domain)$,
    \item the activation function $\sigma: \R \rightarrow \R$ is continuously differentiable
    \item and the derivative of the activation function generates a Nemytskii operator $\sigma': L^r(\domain) \rightarrow \sk{ \left[L^r(\domain) \rightarrow L^{s}(\Omega) \right]}$ with $s = {rq}/{(r-q)}$,
\end{enumerate}
then it holds that $\mathcal{L}:  L^p(\domain) \rightarrow L^q(\domain)$ is Fréchet-differentiable \sk{in any $v \in L^p(\domain)$} with Fréchet-derivative $D\mathcal{L}\sk{(v)}: L^p(\domain) \rightarrow L^q(\Omega)$
\begin{align*}
    D\mathcal{L}(v)(h) = \sigma'(\Psi(v))\cdot\sk{\Tilde{\Psi}(h)},
\end{align*}
\sk{where $\Tilde{\Psi}$ denotes the linear part of $\Psi$, i.e., $\Tilde{\Psi} = \Psi - b$.}
\end{proposition}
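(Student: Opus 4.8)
The plan is to recognize the layer as a composition $\mathcal{L} = \sigma \circ \Psi$ of the affine operator $\Psi$ with the \Nem{} operator associated to $\sigma$, to establish Fréchet-differentiability of each of the two factors on matching spaces, and then to close the argument with the chain rule for Fréchet derivatives. Fix an arbitrary $v \in L^p(\domain)$; the derivative will be computed at this point, and since $v$ is arbitrary the conclusion holds for all $v$.

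First I would dispatch the affine part. Writing $\Psi = \Tilde{\Psi} + b$ with the bounded linear operator $\Tilde{\Psi}u = Wu + \mathcal{K}u$, assumption (i) makes $\Psi: L^p(\domain) \to L^r(\domain)$ a continuous affine map. Any continuous affine map is Fréchet-differentiable at every point with constant derivative equal to its linear part, since $\Psi(v+h) - \Psi(v) - \Tilde{\Psi}(h) = 0$ identically; hence $D\Psi(v) = \Tilde{\Psi}$ as an operator $L^p(\domain) \to L^r(\domain)$.

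The substantive step is to differentiate the \Nem{} operator $\sigma$ by invoking \cref{thm:frechetdiff} with the role of $p$ there played by $r$. The hypothesis ``$r>q$, or $r=q=\infty$'' is exactly the admissible range $1 \leq q < r < \infty$ or $q = r = \infty$ of that theorem; assumption (ii) supplies the continuous differentiability of $\sigma$; and assumption (iii), with the Hölder exponent $s = rq/(r-q)$, is precisely the requirement that the \Nem{} operator of $\sigma'$ define a continuous map $L^r(\domain) \to L^s(\domain)$. Indeed, since $1/q = 1/s + 1/r$, pointwise multiplication by $\sigma'(w) \in L^s(\domain)$ is a bounded operator $L^r(\domain) \to L^q(\domain)$, which is the content recorded by the bracket in (iii). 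Thus \cref{thm:frechetdiff} applies and yields that $\sigma: L^r(\domain) \to L^q(\domain)$ is Fréchet-differentiable at the point $w = \Psi(v)$ with $D\sigma(w)(g) = \sigma'(w)\cdot g$ for all $g \in L^r(\domain)$.

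Finally I would apply the chain rule: with $\Psi$ differentiable at $v$ into $L^r(\domain)$ and $\sigma$ differentiable at $\Psi(v)$ out of $L^r(\domain)$, the composition $\mathcal{L} = \sigma \circ \Psi: L^p(\domain) \to L^q(\domain)$ is Fréchet-differentiable at $v$ with $D\mathcal{L}(v) = D\sigma(\Psi(v)) \circ D\Psi(v)$, that is $D\mathcal{L}(v)(h) = \sigma'(\Psi(v)) \cdot \Tilde{\Psi}(h)$, as claimed. I expect the only genuine obstacle to be the middle step, namely verifying that the exponent bookkeeping $s = rq/(r-q)$ lets \cref{thm:frechetdiff} be applied verbatim with $r$ in place of $p$; once both factors are shown differentiable on matching spaces, the chain rule concludes immediately.
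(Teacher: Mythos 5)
Your proposal is correct and follows essentially the same route as the paper's proof: apply \cref{thm:frechetdiff} (with $r$ in place of $p$) to differentiate the \Nem{} operator at $\Psi(v)$, observe that the continuous affine map $\Psi$ has constant Fréchet derivative $\Tilde{\Psi}$, and conclude via the chain rule. Your additional Hölder-exponent check that multiplication by $\sigma'(w)\in L^s(\domain)$ maps $L^r(\domain)$ boundedly into $L^q(\domain)$ is a welcome explicit verification that the paper leaves implicit.
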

\begin{proof}
\cref{thm:frechetdiff} yields that 
$D\sigma(v): L^r(\domain) \rightarrow L^q(\domain)$ is well defined and continuous for $v\in L^r(\Omega)$. Fréchet-differentiability of linear and continuous operators on Banach spaces (see e.g., \cite[Ex. 1.3]{ambrosetti}) yields the continuity of $\Psi: L^p(\domain) \rightarrow L^r(\domain)$ in all $v \in L^p(\domain)$ with $
D\Psi(v)(h) = \sk{\Tilde{\Psi}}(h).$
The claim follows from the chain-rule for Fréchet-differentiable operators, see \cite[Prop. 1.4 (ii)]{ambrosetti}.
\end{proof}
For $p < q$ Fréchet-differentiability of a Nemytskii operator implies that the generating function is constant, and respectively affine linear for $p=q<\infty$, see \cite[Ch 3.1]{goldberg92}. Therefore, unless $p = \infty$, Fréchet-differentiability of neural operators with non-affine linear activation functions is only achieved at the cost of mapping \sk{the output of the affine part} into a less regular space.
\begin{example}\label{ex:fno}
    For a continuous convolutional neural operator layer as constructed in Example \ref{ex:continuouslayer}, we consider a parametrization of the kernel function by a set of parameters $\hat{\theta} = \lbrace\hat{\theta}_k\rbrace_{k \in I} \subset \mathbb{C}$, where $I$ is a finite set of indices, such that
    \begin{align}\label{eq:fnokernel}
        \kappa_{\hat{\theta}}(x) = \sum_{k \in I} \hat{\theta}_k\, b_k(x),
    \end{align}
    with Fourier basis functions $b_k(x) = \exp{(2\pi i\, kx)}$ \sk{ for $x \in \domain$}.
    Effectively, this amounts to parametrizing the kernel function by a finite number of Fourier coefficients. The resulting linear operator and the operator layer are denoted by $\Psi_{\hat{\theta}}$ and $\mathcal{L}_{\hat{\theta}}$. We note that FNOs proposed in \cite{li2021fourier} are neural operators that consist of such layers. It is easily seen that the kernel function defined by \eqref{eq:fnokernel} is bounded and thus $\kappa_{\hat{\theta}} \in L^{\infty}(\domain)$. Therefore, for suitable activation functions, Proposition \ref{prop:frechet} yields Fréchet-differentiability of $\mathcal{L}_{\hat{\theta}}$ with respect to its input function $v$, \sk{which was similarly observed in \cite{li2021physics}}. Additionally, for fixed $v$ we consider the operator $\mathcal{L}_{(\cdot)}(v):\hat{\theta} \mapsto\ \mathcal{L}_{\hat{\theta}}(v)$ which maps a set of parameters to a function. With the arguments from \cref{prop:frechet} we derive the partial Fréchet-derivatives of an FNO-layer with respect to its parameters for $h_k=(1+i)\ e_k$ as
    \begin{align*}
        D_{\hat{\theta}_k}\mathcal{L}_{\hat{\theta}}(v) := D\mathcal{L}_{\hat{\theta}}(v)(h_k) = \sigma'(\Psi_{\sk{\hat{\theta}}}(v))\, \sk{D\Psi}\sk{_{\hat{\theta}}(v)(h_k)},
    \end{align*}
    where $e_k$ denotes the \sk{$k$-th} canonical basis vector. Computing 
    the Fréchet-derivative of $\Psi$ \sk{in the sense of Wirtinger calculus (\cite[Ch. 1]{Remmert1991}), this can be rewritten as} $D_{\hat{\theta}_k}\mathcal{L}_{\hat{\theta}}(v) = \sigma'(\Psi_{\sk{\hat{\theta}}}(v))\, \bar{\hat{v}}_k\,\bar{b}_k$, where $\hat{v}_k$ denotes the $k$-th Fourier coefficient of $v$. Here, for a 
    complex number $z\in\mathbb{C}$, we denote by $\bar{z}$ its complex conjugate.
\end{example}
\section{Connections to Convolutional Neural Networks}\label{sec:cnnfno}
In this section we analyze the connection between FNOs and CNNs. Thus, for the remainder of this work, we set the domain to be the $d$-dimensional torus, i.e., $\domain = \mathbb{R}^d / \mathbb{Z}^d$. As described in \cref{ex:fno}, the main idea of FNOs is to parametrize the convolution kernel by a finite number of Fourier coefficients $\hat{\theta} = \lbrace\hat{\theta}_k\rbrace_{k \in I} \subset \mathbb{C}$, where $I \subset \mathbb{Z}^d$ is a finite set of indices. Making use of the convolution theorem, see e.g., \cite[Prop. 3.1.2 (9)]{grafakos}, the kernel integral operator can then be written as
\begin{align}\label{eq:fnolayer}
    \mathcal{K}_{\hat{\theta}} v = \mathcal{F}^{-1}\left(\hat{\theta} \cdot \mathcal{F}v\right),
\end{align}
where $\mathcal{F}\sk{: \left[\Omega \rightarrow \mathbb{C}\right] \rightarrow \left[\mathbb{Z}^d \rightarrow  \mathbb{C}\right]}$ denotes the Fourier transform \sk{on the torus (see e.g., \cite[Ch. 3]{grafakos})} and $\cdot$ denotes elementwise multiplication in the sense that
\begin{align}\label{eq:fnocontzero}
    \left(\hat{\theta} \cdot \mathcal{F}v\right)_k = 
        \begin{cases}
        \hat{\theta}_k\, \left(\mathcal{F}v\right)_k & \text{for } k \in I,\\
        0 & \text{otherwise.}
    \end{cases}
\end{align} 
We only consider parameters such that $\mathcal{K}$ maps real-valued functions to real-valued functions. This is equivalent to Hermitian symmetry, i.e., $\hat{\theta}_k = \overline{\hat{\theta}_{-k}}$ and in particular $\hat{\theta}_0 \in \R$. As proposed in \cite{li2021fourier}, for $N \in \mathbb{N}$ we choose the set of \sk{multi-}indices
    $I_N := \lbrace -\lceil (N-1)/2\rceil, \,\hdots\,,0,\, \hdots\,, \lfloor (N-1)/2 \rfloor\rbrace^d$, which corresponds to parametrizing the $N$ lowest frequencies in each dimension. This is in accordance to the universal approximation result for FNOs derived in \cite{kovachki2021universal}. At this point, we assume $N$ to be an odd number to avoid problems with the required symmetry and expand the approach to even choices of $N$ in Section \ref{subsec:evendims}.
Although an FNO is represented by a finite number of parameters, a discretization of \eqref{eq:fnolayer} is needed to process discrete data, e.g., digital images. We therefore define the set of spatial \sk{multi-}indices $J_N := \left\lbrace 0,\,\hdots\,,N-1 \right\rbrace^d$ and write $v \in \R^{J_N}$ for mappings $v: J_N \rightarrow \R$. Furthermore, we discretize the Fourier transform for $v \in \R^{J_N}$ as
\begin{align*}
    \left(Fv\right)_k = \frac{1}{\lambda} \sum_{j \in J_N} v_j\, e^{-2\pi i \, \left\langle k, \frac{j}{N}\right\rangle} \qquad \text{for all } k \in I_N
\end{align*}
and its inverse for $\hat{v} \in \mathbb{C}^{I_N}$ as
\begin{align*}
\left(F^{-1}\hat{v}\right)_j = \frac{\lambda}{|J_N|}\sum_{k \in I_N} \hat{v}_k\, e^{2\pi i \, \left\langle k, \frac{j}{N}\right\rangle} \;\;\quad \quad\quad\text{for all } j \in J_N,
\end{align*}
where $\lambda \in \{1, \sqrt{|J_N|}, |J_N|\}$ determines the normalization factor.
The discretized convolution operator parametrized by $\hat{\theta} \in \mathbb{C}^{I_N}_{\text{sym}} := F(\R^{J_N})$ is then defined by
\begin{align*}\label{eq:discfno}
    K(\hat{\theta})(v) = F^{-1}\left(\hat{\theta} \cdot Fv\right) \qquad \text{for }v \in \R^{J_N}.
\end{align*}
For the remainder of this work, we refer to the above implementation of convolution as the FNO-implementation.
In the following we compare the FNO-implementation to the standard implementation of the convolution of $\theta$ and $v \in \R^{J_N}$ in a conventional CNN, which can be expressed as
\begin{align*}
    C(\theta)(v)_j = \sum_{\Tilde{\jmath}\in J_N} \theta_{j-\Tilde{\jmath}}\,v_{\Tilde{\jmath}} \qquad \text{for all } j \in J_N.
\end{align*}
For the sake of simplicity, we handle negative indices by assuming that the values can be perpetuated periodically, although this is usually not done in practice.
\subsection{Extension to Higher Input-Dimensions by Zero-Padding}
So far, the presented implementations of convolution require the dimensions of the parameters $\theta$, or $\hat{\theta}$ and the input $v$ to coincide. In accordance to \eqref{eq:fnocontzero}, the authors of \cite{li2021fourier} propose to handle dimension mismatches by zero-padding of the spectral parameters. More precisely, a low-dimensional set of parameters $\hat{\theta} \in \mathbb{C}^{I_M}$ is adapted to an input $v \in  \R^{J_{N}}$ with odd $N \in \mathbb{N}$ by setting
\begin{align*}
    \hat{\theta}^{M \rightarrow N}_k = \begin{cases}
        \hat{\theta}_k & \text{for } k \in I_N \cap I_M,\\
        0 & \text{for } k \in I_N \backslash I_M.
    \end{cases}
\end{align*}
Since we choose $N$ to be odd, the required symmetry is not hurt by the above operation.  
The extended FNO-implementation of the convolution is then given for $\hat{\theta} \in \mathbb{C}^{I_M}$ and $v \in  \R^{J_{N}}$ by
\begin{equation*}
    K(\hat{\theta})(v) := K(\hat{\theta}^{M \rightarrow N})(v).
\end{equation*}
Analogously, in the conventional CNN-implementation the convolution of parameters $\theta \in \R^{J_M}$ and $v \in \R^{J_N}$ with $N \geq M$ is computed as 
\begin{equation*}
    C(\theta)(v) := C(\theta^{M\rightarrow N})(v),
\end{equation*}
where again, $\theta^{M\rightarrow N} \in \R^{J_M}$ denotes the zero-padded version of $\theta$.
We stress that, although the technique to generalize the implementations to higher input dimensions is the same, the outcome differs substantially. This was already mentioned in \cite[Sec. 4]{kovachki2021neural} and is discussed further in Section \ref{subsec:resinv}.
\subsection{Convertibility and Complexity}
Deriving FNOs from convolutional neural operators using the convolution theorem suggests that there is a way to convert one implementation of convolution into the other \sk{as long as the input dimension is fixed}. The following Lemma shows that this is indeed possible.
\begin{lemma}\label{lem:convertibility}
    Let $M \leq N$ both be odd and let $T: \R^{J_N} \rightarrow \mathbb{C}^{I_N}$ be defined for $\theta \in \R^{J_N}$ as
        $T(\theta) = \lambda \, F(\theta)$.
    For any $\theta \in \R^{J_M}$ and $v \in \R^{J_N}$ it holds true that
    \begin{align*}
        C(\theta)(v) = K(T(\theta^{M\rightarrow N}))(v)
    \end{align*}
    and for any $\hat{\theta} \in \mathbb{C}^{I_M}_{\text{sym}}$ and $v \in \R^{J_N}$ it holds true that
    \begin{align*}
         K(\hat{\theta})(v) = C(T^{-1}(\hat{\theta}^{M\rightarrow N}))(v).
    \end{align*}
\end{lemma}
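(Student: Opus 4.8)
The plan is to reduce both identities to a single discrete convolution theorem for the equal-dimension case (both arguments living in $\R^{J_N}$), and then to peel off the zero-padding using the definitions of the extended implementations. Concretely, I would first prove the core identity: for $\theta, v \in \R^{J_N}$ and every $k \in I_N$,
\[
\left(F(C(\theta)(v))\right)_k = \lambda\,(F\theta)_k\,(Fv)_k = \left(T(\theta)\right)_k\,(Fv)_k.
\]
To establish this I insert the definition of $C(\theta)(v)_j$ into the forward transform $F$, interchange the two finite sums, and substitute the shifted index $m = j - \tilde{\jmath}$ taken modulo $N$. The periodic perpetuation of the kernel makes this substitution licit and factors the double sum into a product of two single sums, each of which is a discrete Fourier transform up to the normalization $\lambda$; careful tracking of the single leading $1/\lambda$ against the two factors of $\lambda$ produced by $F\theta$ and $Fv$ yields exactly $\lambda\,(F\theta)_k(Fv)_k$, which equals $(T(\theta))_k (Fv)_k$ by definition of $T$. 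Applying $F^{-1}$ to both sides and recalling $K(\hat\theta)(v) = F^{-1}(\hat\theta \cdot Fv)$ then gives $C(\theta)(v) = K(T(\theta))(v)$ in the equal-dimension case.

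For the first claimed identity with $M \leq N$, the definition of the extended CNN-implementation gives $C(\theta)(v) = C(\theta^{M\to N})(v)$ with $\theta^{M\to N} \in \R^{J_N}$, so the equal-dimension result applies verbatim and immediately yields $C(\theta)(v) = K(T(\theta^{M\to N}))(v)$.

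For the second identity I would first observe that $T$ restricts to a bijection $T: \R^{J_N} \to \C^{I_N}_{\text{sym}}$ with inverse $T^{-1} = \tfrac{1}{\lambda}F^{-1}$, since $F$ is invertible and $\C^{I_N}_{\text{sym}}$ is by definition its image. I then check that spectral zero-padding preserves Hermitian symmetry: because $M$ and $N$ are odd, the index set $I_N \setminus I_M$ decomposes into pairs $\pm k$, so the padded coefficients still satisfy $\hat\theta^{M\to N}_k = \overline{\hat\theta^{M\to N}_{-k}}$ and hence $\hat\theta^{M\to N} \in \C^{I_N}_{\text{sym}}$. Setting $\psi = T^{-1}(\hat\theta^{M\to N}) \in \R^{J_N}$ and applying the equal-dimension identity to $\psi$ gives $K(T(\psi))(v) = C(\psi)(v)$, that is $K(\hat\theta^{M\to N})(v) = C(T^{-1}(\hat\theta^{M\to N}))(v)$; the definition $K(\hat\theta)(v) = K(\hat\theta^{M\to N})(v)$ then closes the argument.

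The main obstacle is the bookkeeping inside the convolution theorem — specifically the modular index shift $m = j - \tilde{\jmath}$ and the consistent placement of the normalization factor $\lambda$ across the forward and inverse transforms — together with verifying that the periodic handling of negative kernel indices matches the implicit periodicity of the discrete Fourier transform, and that zero-padding commutes with the symmetry constraints defining $\C^{I_N}_{\text{sym}}$.
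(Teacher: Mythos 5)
Your proposal is correct and follows essentially the same route as the paper: reduce to the equal-dimension case via the definition of the zero-padded extensions, establish the discrete convolution theorem $F(C(\theta)(v)) = \lambda\,F(\theta)\cdot F(v)$ by direct computation, and conclude using that $F:\R^{J_N}\to\mathbb{C}^{I_N}_{\text{sym}}$ is a bijection. Your additional checks (the modular index substitution, and that zero-padding preserves Hermitian symmetry for odd $M,N$ so that $T^{-1}(\hat{\theta}^{M\to N})$ is real) only make explicit what the paper leaves implicit.
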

\begin{proof}
    By the definition of the extension to higher input dimensions we can assume $M = N$. For $\theta, v \in \R^{J_N}$ we derive the discrete analogon of the convolution theorem by inserting the definitions of the discrete Fourier transform as
    \begin{align*}
        F(C(\theta)(v)) = \lambda\,F(\theta)\cdot F(v).
    \end{align*}
    Employing that $F: \R^{J_N} \rightarrow \mathbb{C}^{I_N}_{\text{sym}}$ is a bijection, it follows that
    \begin{align*}
        C(\theta)(v) = F^{-1}(\lambda\,F(\theta)\,F(v)) = K(T(\theta))(v).
    \end{align*}
    The second statement can be proven analogously. We note that $T^{-1}$ is well-defined since $\lambda \geq 1$ for odd $N$.
\end{proof}
Although the above Lemma proves convertibility for a fixed set of parameters and fixed input dimensions, a conversion can increase the amount of required parameters, as in general, the dimension of the converted parameters has to match the input dimension. It becomes clear that spatial locality cannot be enforced with the proposed FNO-parametrization and spectral locality cannot be enforced with the CNN-parametrization. Therefore, different behavior during the training process is to be expected if the parameter size does not match the input size. Moreover, the following Lemma shows that even for matching dimensions, equivalent behavior for gradient-based optimization like steepest descent requires careful adaptation of the learning rate, since the computation of gradients is not equivariant with respect to the function $T$.  
\begin{lemma}
    For odd $N \in \mathbb{N}$ and $v, \theta \in \R^{J_N}$ and $\hat{\theta} = T(\theta)$ it holds true that
    \begin{align*}
        \nabla_{\hat{\theta}} K(\hat{\theta})(v) = \frac{1}{|J_N|}\; T \left( \vphantom{\hat{\theta}} \nabla_{\theta}C(\theta)(v) \right). 
    \end{align*}
\end{lemma}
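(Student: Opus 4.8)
The plan is to reduce everything to the convertibility already established in \cref{lem:convertibility} together with the computation of a single adjoint. Fixing the input $v$, I would first regard the two maps $G_K:\hat{\theta}\mapsto K(\hat{\theta})(v)$ and $G_C:\theta\mapsto C(\theta)(v)$ as functions of their parameters only. \cref{lem:convertibility} gives $K(\hat{\theta})(v) = C(T^{-1}(\hat{\theta}))(v)$ for $\hat{\theta}=T(\theta)$, i.e.\ $G_K = G_C\circ T^{-1}$, where $T=\lambda F$ is an $\R$-linear bijection between $\R^{J_N}$ and $\mathbb{C}^{I_N}_{\mathrm{sym}}=F(\R^{J_N})$ (well defined since $\lambda\geq 1$ for odd $N$). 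The gradient driving steepest descent is the Riesz representative of the Fréchet derivative, so I would equip $\R^{J_N}$ with the standard real inner product and the real vector space $\mathbb{C}^{I_N}_{\mathrm{sym}}$ with the induced real inner product $\langle\hat{a},\hat{b}\rangle=\mathrm{Re}\sum_{k\in I_N}\hat{a}_k\,\overline{\hat{b}_k}$, which is the rigorous counterpart of the Wirtinger computation carried out in \cref{ex:fno}.

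Next I would apply the chain rule to $G_K=G_C\circ T^{-1}$. Since $T^{-1}$ is linear, its derivative is $T^{-1}$ itself, so for a fixed downstream objective the backpropagated gradients satisfy $\nabla_{\hat{\theta}}=(T^{-1})^{*}\,\nabla_{\theta}=(T^{*})^{-1}\,\nabla_{\theta}$, where $T^{*}:\mathbb{C}^{I_N}_{\mathrm{sym}}\to\R^{J_N}$ is the adjoint of $T$ with respect to the two inner products above. Thus the whole statement collapses to identifying $(T^{*})^{-1}$, and the claimed formula is precisely the assertion that $(T^{*})^{-1}=\tfrac{1}{|J_N|}\,T$. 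The abuse of notation is that both $K(\hat{\theta})(v)$ and $C(\theta)(v)$ are understood to feed the same objective, so that the identity holds for each output component and hence for the stacked gradient.

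To compute $T^{*}$ I would expand $\langle Ta,\hat{b}\rangle=\mathrm{Re}\sum_k \lambda\,(Fa)_k\,\overline{\hat{b}_k}$ using $\lambda(Fa)_k=\sum_j a_j\,e^{-2\pi i\langle k,j/N\rangle}$, exchange the sums, and compare with $\langle a,T^{*}\hat{b}\rangle=\sum_j a_j\,(T^{*}\hat{b})_j$. Recognizing the remaining frequency sum as the real part of the inverse transform and using $(F^{-1}\hat{b})_j=\tfrac{\lambda}{|J_N|}\sum_k \hat{b}_k\,e^{2\pi i\langle k,j/N\rangle}$, this yields $T^{*}=\tfrac{|J_N|}{\lambda}\,F^{-1}$. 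Combining with $T=\lambda F$ gives the discrete Parseval identity $T^{*}T=|J_N|\,\mathrm{Id}$ on $\R^{J_N}$, hence $(T^{*})^{-1}=\tfrac{1}{|J_N|}\,T$, which is the claim.

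The main obstacle I anticipate is the bookkeeping in the adjoint computation: one has to track the normalization constant $\lambda$ together with the complex conjugations introduced by passing to $\overline{\hat{b}_k}$, and one must use the real inner product forced by the Hermitian-symmetry constraint $\hat{\theta}_k=\overline{\hat{\theta}_{-k}}$ rather than a naive complex one. Otherwise the factor $\tfrac{1}{|J_N|}$ and the appearance of $T$ (as opposed to $T^{-1}$ or $T^{*}$) come out wrong. Everything else is routine once the identity $T^{*}T=|J_N|\,\mathrm{Id}$ is in hand.
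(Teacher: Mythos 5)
Your proposal is correct and follows essentially the same route as the paper: both reduce the claim via \cref{lem:convertibility} to $K(\hat{\theta})(v) = C(T^{-1}(\hat{\theta}))(v)$ and apply the chain rule to the linear map $T^{-1}$. The only difference is presentational --- the paper computes the partial derivatives $\partial T^{-1}(\hat{\theta})_j / \partial \hat{\theta}_k$ directly in coordinates (with the Wirtinger convention supplying the conjugated exponential), whereas you package the same computation as the adjoint identity $T^{*}T = |J_N|\,\mathrm{Id}$ with respect to the real inner product on $\mathbb{C}^{I_N}_{\mathrm{sym}}$, which arrives at the identical factor $\tfrac{1}{|J_N|}\,T$.
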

\begin{proof}
Inserting $\hat{\theta} = T(\theta)$ it follows with the chain rule from Lemma \ref{lem:convertibility} that
    \begin{align*}
        \frac{\partial K(\hat{\theta})(v)_l}{\partial \hat{\theta}_k}= \sum_{j \in J_N} \frac{\partial C(\theta)(v)_l}{\partial \theta_j}\, \frac{\partial\sk{ T^{-1}(\hat{\theta})}_j}{\partial \hat{\theta}_k} = \sk{\frac{1}{|J_N|} }\sum_{j \in J_N} \frac{\partial C(\theta)(v)_l}{\partial \theta_j}\,e^{-i2\pi \langle k, \frac{j}{N}\rangle}
    \end{align*}
    for $k \in I_N$.
The claim now follows by inserting the definition of $T$.
\end{proof}
\subsection{Adaptation to Even Dimensions}\label{subsec:evendims}
\begin{figure}[t]%
\centering%
\includegraphics[width=.6\textwidth]{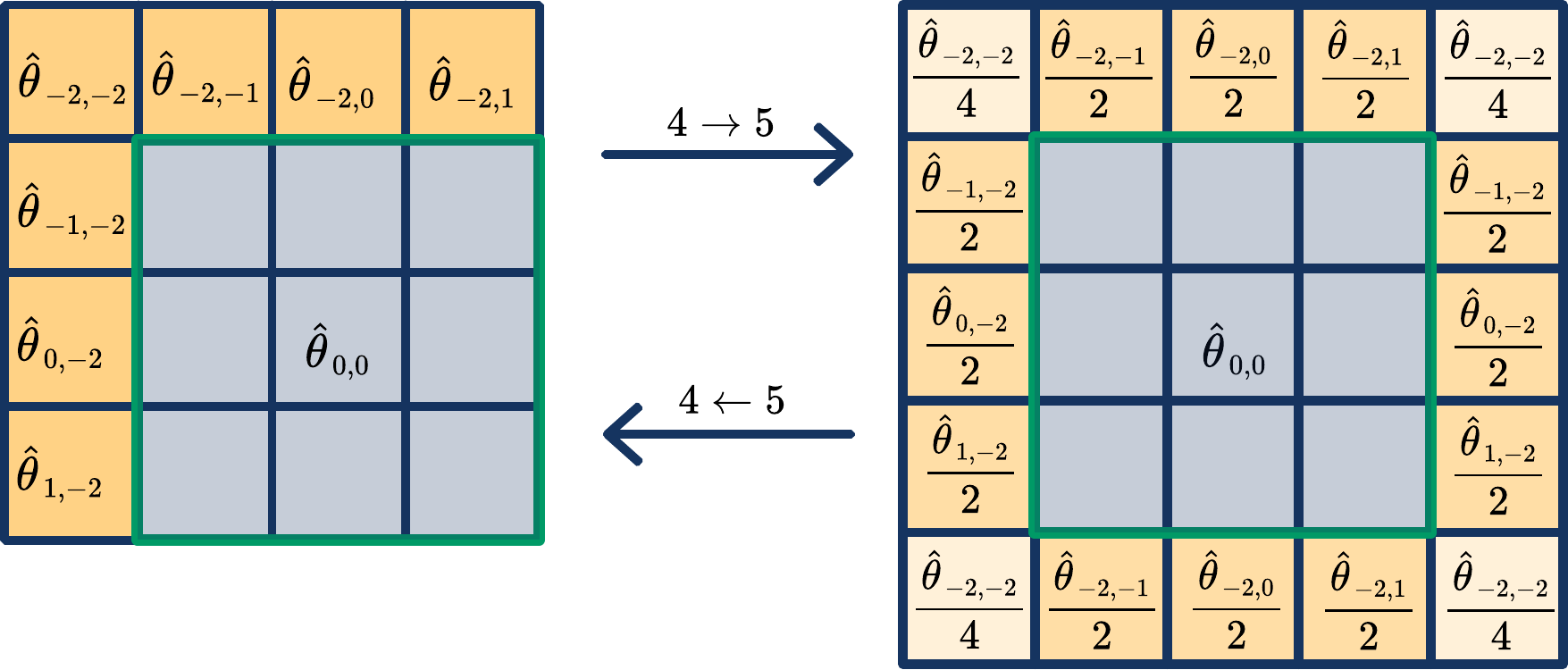}
\caption{Nyquist splitting for spectral parameters to extend real-valued trigonometric interpolation to even dimensions.}%
\label{fig:nyquist}%
\end{figure}
For the remainder of this paper we consider the special case $\domain = \R^2 / \mathbb{Z}^2$ and adapt the FNO-implementation to even dimensions.
For odd dimensions $M, N$, zero-padding of a set of spectral coefficients does not violate the requirement $\hat{\theta}^{M \to N} \in \mathbb{C}^{I_N}_{\text{sym}}$. This property is lost in general for even dimensions. Since for odd dimensions, zero-padding in the spectral domain is equivalent to trigonometric interpolation, we perform the adaptation of dimensions such that $\hat{\theta}^{M \rightarrow N}$ is a trigonometric interpolator of a real-valued function \sk{(see \cite{trigo19} for an exhaustive study on this topic)}. In practice, this means splitting the coefficients corresponding to the Nyquist frequencies to interpolate from an even dimension to the next higher odd dimension, or to invert this splitting to interpolate from an odd dimension to the next lower even dimension (see Figure \ref{fig:nyquist}).  
The real-valued trigonometric interpolation of $v \in \R^{J_M}$ to a dimension $N$ is then given by
\begin{align*}
    v^{M \xrightarrow{\Delta} N} := F^{-1} \left( (Fv)^{M \rightarrow N}\right).
\end{align*}
We extend the FNO-implementation to parameters $\hat{\theta} \in \mathbb{C}^{I_N}_{\text{sym}}$ and inputs $v \in \R^{J_N}$ with even $N \in \mathbb{N}$ by defining
\begin{align}\label{eq:evendims}
    K(\hat{\theta})(v) := \left(K(\hat{\theta}^{N \rightarrow \Tilde{N}})(v^{N\xrightarrow{\Delta}\Tilde{N}})\right)^{\Tilde{N}\xrightarrow{\Delta} N},
\end{align}
where $\Tilde{N} = N+1$. 
We note that by this choice we lose the direct convertibility to the CNN-implementation as in general for even dimensions
\begin{align*}
    K(\hat{\theta})(v) \neq F^{-1}(T^{-1}(\hat{\theta})\, Fv),
\end{align*}
as the right hand side corresponds to zero-padding of the spectral coefficients regardless of the oddity of the dimensions.
However, we can still convert the FNO-implementation to the CNN-implementation and vice versa, by adapting the magnitude of coefficients to the effects of the Nyquist splitting.
\subsection{Interpolation Equivariance}\label{subsec:resinv}
Our motivation to perform the adaptation to even dimension as proposed in the preceding section, is that the resulting implementation of convolution is equivariant with respect to (real-valued) trigonometric interpolation. 
\begin{corollary}
For $\hat{\theta} \in \mathbb{C}^{I_M}_{\text{sym}}$, $v \in \R^{J_N}, M \leq N$ it holds true for any $L \geq M$ that
    \begin{align*}
        K(\hat{\theta})(v^{N \xrightarrow{\Delta} L}) = \left( K(\hat{\theta})(v) \right)^{N \xrightarrow{\Delta} L}.
    \end{align*}
\end{corollary}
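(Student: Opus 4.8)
The plan is to verify the identity in the Fourier domain. Since the discrete Fourier transform $F$ is a bijection onto $\mathbb{C}^{I_N}_{\text{sym}}$ (as used in the proof of \cref{lem:convertibility}), it suffices to show that the two sides carry identical Fourier coefficients on $I_L$, and then apply $F^{-1}$. Both building blocks act diagonally in frequency: the convolution $K(\hat{\theta})$ multiplies the $k$-th coefficient by the zero-padded parameter $\hat{\theta}^{M\to L}_k$, while real-valued trigonometric interpolation $(\cdot)^{N \xrightarrow{\Delta} L} = F^{-1}\circ(\cdot)^{N\to L}\circ F$ masks the coefficients to the overlap $I_N\cap I_L$ and fills the remaining frequencies of $I_L$ with zeros. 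Up to conjugating by $F$, the whole statement thus collapses to a purely algebraic identity for coefficient arrays.

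First I would treat the case of odd $N$ and $L$, where the definitions of $K$ and of the interpolation give exactly the diagonal multipliers above, so that the claim reduces to the commutation identity
\begin{align*}
    \left(\hat{\theta}^{M \to N} \cdot Fv\right)^{N \to L} = \hat{\theta}^{M \to L} \cdot (Fv)^{N \to L}.
\end{align*}
I would establish this frequency by frequency over the partition $I_L = I_M \,\sqcup\, \big((I_N\cap I_L)\setminus I_M\big)\,\sqcup\,(I_L\setminus I_N)$. The decisive structural fact is that $\hat{\theta}$ is supported on $I_M$ with $M\leq N$ and $M\leq L$, hence $I_M\subseteq I_N\cap I_L$. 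On $I_M$ both sides equal $\hat{\theta}_k(Fv)_k$; on $(I_N\cap I_L)\setminus I_M$ both sides vanish because $\hat{\theta}^{M\to N}_k=\hat{\theta}^{M\to L}_k=0$; and on $I_L\setminus I_N$ both sides vanish because the interpolation mask $(\cdot)^{N\to L}$ annihilates those frequencies. Agreement on all of $I_L$ then yields the equality of functions after applying $F^{-1}$.

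For general, possibly even, $N$ or $L$, the maps $K(\hat{\theta})$ and $(\cdot)^{N \xrightarrow{\Delta} L}$ are no longer plain maskings: by \eqref{eq:evendims} and the definition of the real-valued interpolation they are routed through the nearest enclosing odd dimension via the Nyquist splitting of \cref{fig:nyquist}. My strategy would be to reduce this case to the odd one by showing that these passages are mutually consistent, i.e., that the split/unsplit maps compose transitively (so that interpolating $N\xrightarrow{\Delta}\tilde{N}\xrightarrow{\Delta}L$ agrees with $N\xrightarrow{\Delta}L$) and commute with the masking by $\hat{\theta}$. Granting this, every even-dimensional composition collapses onto the odd-dimensional multiplier identity already proven, which is precisely the design goal motivating the even-dimension adaptation.

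The main obstacle is exactly this even-dimensional bookkeeping: the Nyquist band of an even-sampled signal has no Hermitian partner, so one must check that splitting and its inverse preserve the symmetry class $\mathbb{C}^{I}_{\text{sym}}$ and interact trivially with a parameter supported on the strictly lower frequencies $I_M$. Once it is verified that the splitting touches only the Nyquist band, which lies outside $I_M$ whenever $M$ is at most the smaller of the two dimensions, the odd-case commutation carries over verbatim and the reduction closes the proof.
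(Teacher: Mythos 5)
Your proposal is correct and follows essentially the same route as the paper: reduce to odd dimensions by observing that the even-dimensional operations factor through the nearest enclosing odd dimension via the Nyquist splitting, then verify the identity coefficient-wise in the Fourier domain (both operations acting as diagonal multipliers/masks supported on $I_M$), and conclude by applying $F^{-1}$. Your frequency-by-frequency case analysis over the partition of $I_L$ is somewhat more explicit than the paper's one-line display, but the underlying argument is identical.
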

\begin{proof}
    We first note that it holds for any choice of $M \leq N,L$ that
    \begin{equation*}
        K(\hat{\theta})(v^{N \xrightarrow{\Delta} L})  = \left(K(\hat{\theta}^{M \to \Tilde{M} \to \Tilde{L}})(v^{N \xrightarrow{\Delta} \Tilde{N} \xrightarrow{\Delta} \Tilde{L}})\right)^{\Tilde{L} \xrightarrow{\Delta} L},
    \end{equation*}
    where $\Tilde{L} = L+(1-L\%2)$, $\Tilde{M} = M+(1-M\%2)$, $\Tilde{N} = N+(1-N\%2)$ \sk{and $\%$ denotes the modulo operation.} Therefore, we can assume $L,M$ and $N$ to be odd without loss of generality and thus $\Tilde{L} = L, \Tilde{M} = M$ and $\Tilde{N} = N$. Regarding the discrete Fourier coefficients then reveals that
    \begin{align*}
        &F(K(\hat{\theta})(v^{N \xrightarrow{\Delta}L}))_k =
        %
        \left\{%
        \begin{aligned}
            \hat{\theta}_k\,F(v)_k & \quad\text{ for } k \in I_M,\\
            0 & \quad\text{ otherwise}
        \end{aligned}
        \right\}
        = F((K(\hat{\theta})(v))^{N \xrightarrow{\Delta}L})_k.
    \end{align*}
    Applying the inverse Fourier transform completes the proof.
\end{proof}
\section{Numerical Examples}\label{sec:num}
\sk{In this section we compare the discussed implementations of convolution numerically in the context of image classification.\footnote{Our code is available online:  \href{https://github.com/samirak98/FourierImaging}{github.com/samirak98/FourierImaging}.} Here, the task is to assign a label from $s\in\mathbb{N}$ possible classes to a given image $v: [0,1]^2 \rightarrow \R^{n_c}$, with $n_c \in \mathbb{N}$ denoting the number of color channels. Solving this task numerically requires discrete input images of the form $v^{N} = v|_{J_N/N} \in \R^{J_N \times n_c}$, where $N \in \mathbb{N}$ denotes the dimension. We note that since we consider a fixed function domain the dimension is proportional to the resolution. If we assume $N$ to be fixed the network is a function $f_\theta:\R^{J_N \times n_c}\to\R^s$. Given a finite training set $D\subset \R^{J_N\times nc} \times \R^s$ we optimize the parameters $\theta$ by minimizing the empirical loss based on the cross-entropy \cite[Ch. 3]{Goodfellow16}.
The networks we use for our experiments consist of several convolutional layers for feature extraction followed by one fully connected classification layer. To make all architectures applicable to inputs of any resolution, we insert an adaptive average pooling layer between the feature extractor and the classifier. 
}
\subsection{Expressivity for Varying Kernel Sizes}
In the first experiment (see \cref{fig:ksize}) we train a CNN \sk{without any residual components} on the FashionMNIST\footnote{This dataset consists of $60,000$ training and $10,000$ test $28\times 28$ images (grayscale).} dataset. The network has two convolutional layers with periodic padding and without striding, followed by an adaptive pooling layer and a linear classifier. Since we do not observe major performance changes on the test set for different kernel sizes, we conclude that on this data set the expressivity of the small kernel architectures is comparable to large kernel architectures. We then convert the convolutional layers of the CNNs with $3\times3$- and $28 \times 28$-kernels to FNO-layers, employing varying numbers of spectral parameters. Here, we observe decreasing performance with smaller spectral kernel sizes, indicating that the learned spatial kernels cannot be expressed well by fewer frequencies. However, in this example, training an FNO with the same structure almost closes this performance gap. This implies the existence of low frequency kernels with sufficient expressivity. \sk{We refer to \cite{williamson22} for a study on training with a spectral parametrization.}
\begin{figure}[t]
\centering
\includegraphics[width=.5\textwidth]{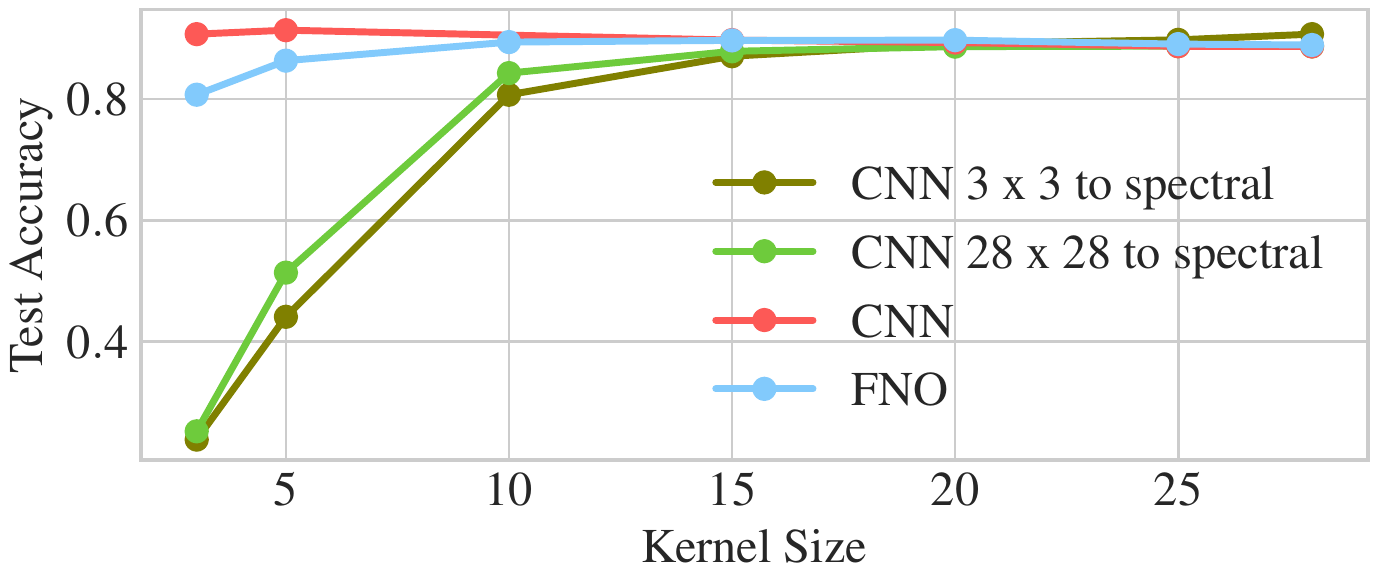}
\caption{Test accuracy of CNNs and FNOs for varying kernel sizes.}
\label{fig:ksize}
\end{figure}
\subsection{Resolution Invariance}%
In the second experiment, we investigate the resolution invariance of the different convolution implementations. In \cref{fig:performancea} we compare the accuracy on test data resized to different resolutions with trigonometric, or bilinear interpolation, respectively. Here, CNN refers to the conventional CNN-implementation with $5 \times 5$ kernel, where dimension mismatches are compensated for by spatial zero-padding of the kernel. FNO refers to the FNO-implementation, where the kernels are adapted to the input dimension by trigonometric interpolation. Additionally, we show the behavior of the CNN for inputs rescaled to the training resolution. Applying trigonometric interpolation before a convolutional layer can be interpreted as an FNO-layer with predetermined output dimensions.

The performance of the CNN varies drastically with the input dimension and peaks for the resolution it was trained on. This result is in accordance with the effect showcased in \cref{fig:bulbul}: Dimension adaption via spatial zero-padding modifies the locality of the kernel and consequently captures different features for different resolutions. While trigonometric interpolation performs best, we see that the FNO adapts very well. In particular, the performance for higher input resolutions deters only slightly, which is not the case for the standard CNN.

Additionally (see \cref{fig:performanceb}), we train a ResNet18 \cite{he2016deep} on the Birds500 data set\footnote{\sk{We employ a former version of the data set, which} consists of $76,262$ RGB images for training and $2,250$ images for testing of size $224\times 224$, where the task is to classify birds out of $450$ possible classes.} with a reduced training size of $112 \times 112$. To regularize the generalization to different resolutions, especially for the FNO-implementation, we replace the standard striding operations by trigonometric downsampling. Compared to the first experiment it stands out that the FNO performs worse for inputs with resolutions below $112 \times 112$, but only slightly diminishes for higher resolutions. We attribute this fact to the dimension reduction operations in the architecture. 
\begin{figure}[t]
\begin{subfigure}{.5\textwidth}
\includegraphics[width=\textwidth]{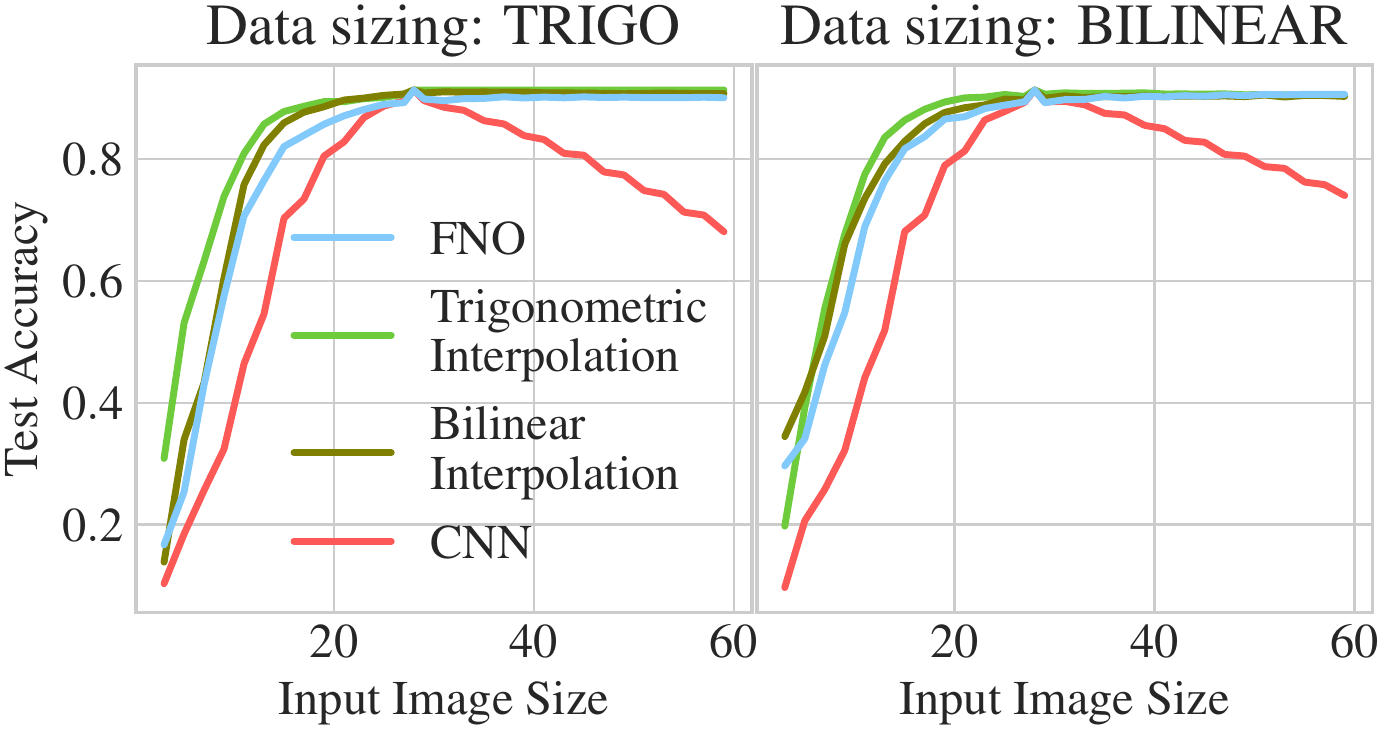}
\caption{FMNIST}\label{fig:performancea}
\end{subfigure}
\begin{subfigure}{.5\textwidth}
\includegraphics[width=\textwidth]{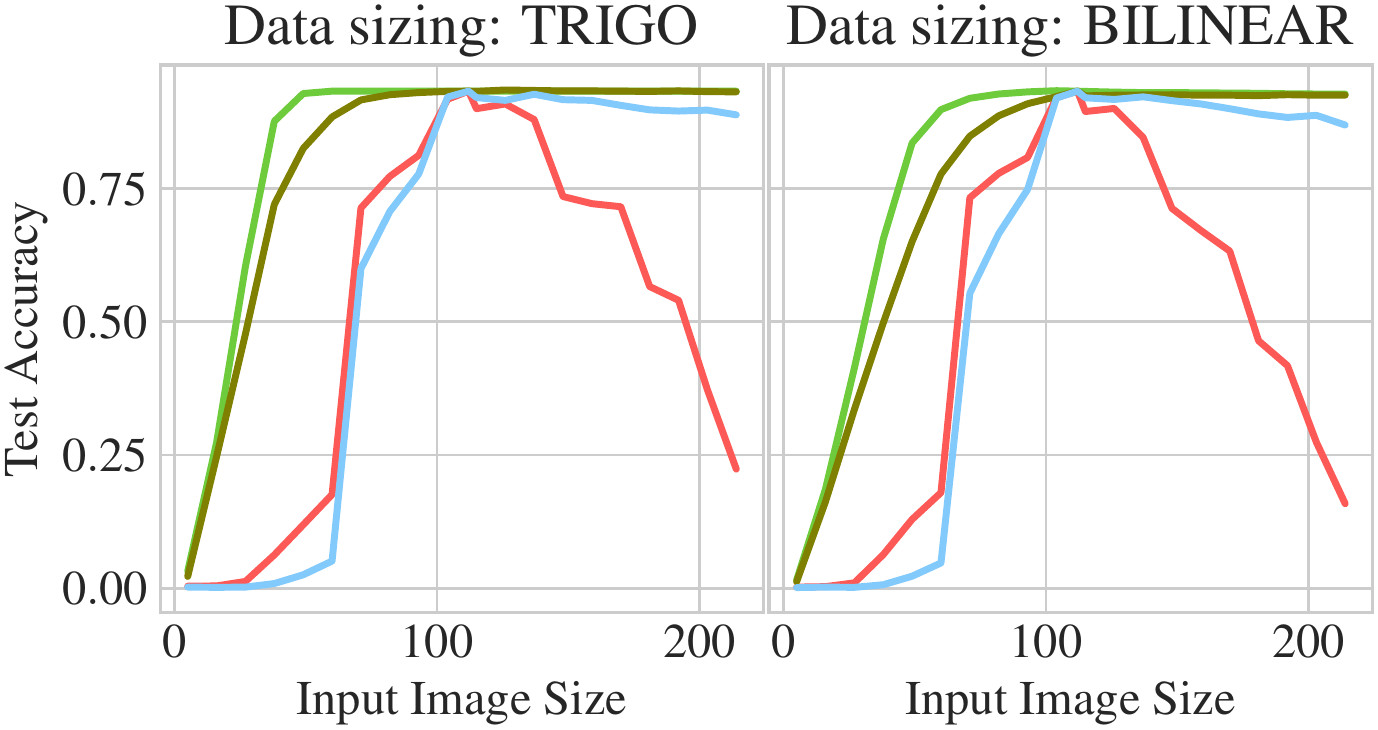}
\caption{Birds500}\label{fig:performanceb}
\end{subfigure}
\caption{Performance for different interpolation methods on test data that has been resized with the interpolation method denoted on top of the plots.}
\end{figure}
\section{Conclusion and Outlook}
In this work, we have studied the regularity of neural operators on Lebesgue spaces and investigated the effects of implementing convolutional layers in the sense of FNOs. Based on the theoretical derivation of the convertibility from standard CNNs to FNOs, our numerical experiments show that it is possible to convert a network that was trained with the standard CNN architecture into an FNO. By this, we could combine the benefits of both approaches: Enforced spatial locality with a small number of parameters during training and an implementation that generalizes well to higher input dimensions during the evaluation. However, we have seen that the trigonometric interpolation of inputs outperforms all other considered approaches. In future work, we want to investigate how the ideas of FNOs and trigonometric interpolation can be incorporated into image-to-image architectures like U-Nets as proposed in \cite{ronneberger15}. Additionally, we want to further explore the effects of training in the spectral domain, for example with respect to adversarial robustness.
\bibliographystyle{splncs04}
\bibliography{bibliography}
\end{document}